\def\eqref#1{(equation~\ref{#1})}
\def\1{\bm{1}}
\def\vh{{\bm{h}}}
\def\vp{{\bm{p}}}
\def\vu{{\bm{u}}}
\def\vx{{\bm{x}}}
\def\mA{{\bm{A}}}
\def\mB{{\bm{B}}}
\def\mW{{\bm{W}}}
\DeclareMathAlphabet{\mathsfit}{\encodingdefault}{\sfdefault}{m}{sl}
\SetMathAlphabet{\mathsfit}{bold}{\encodingdefault}{\sfdefault}{bx}{n}
\newcommand{\proposed}{LD-\textsc{MoLE}\xspace}
\newtheorem{proposition}{Proposition}
\newtheorem{lemma}{Lemma}
\definecolor{citeblue}{RGB}{0,102,204}
\definecolor{refred}{RGB}{216, 81, 64}
\titlespacing*{\paragraph}{0pt}{0.5ex plus 0.2ex minus .2ex}{1em}
\newcolumntype{L}[1]{>{\RaggedRight\arraybackslash}p{#1}}
\newcolumntype{C}[1]{>{\Centering\arraybackslash}p{#1}}
\title{LD-MoLE: Learnable Dynamic Routing for Mixture of LoRA Experts}
\author{
Yuan Zhuang\textsuperscript{1}\thanks{These authors contributed equally to this work.},\quad 
Yi Shen\textsuperscript{2}\footnotemark[1], \quad
Yuexin Bian\textsuperscript{3}, \quad
Qing Su\textsuperscript{1},\\
\textbf{Shihao Ji\textsuperscript{1}, \quad
Yuanyuan Shi\textsuperscript{3}, \quad
Fei Miao\textsuperscript{1}\thanks{Corresponding author: Fei Miao}}
\\[0.8em]
\textsuperscript{1}University of Connecticut,\ 
\textsuperscript{2}University of Pennsylvania,\ 
\textsuperscript{3}University of California San Diego\\
\texttt{\{yuan.2.zhuang, qing.2.su, shihao.ji, fei.miao\}@uconn.edu},\\ 
\texttt{eshen6959@gmail.com},\\ 
\texttt{\{yubian, yyshi\}@ucsd.edu}
}
\begin{document}

\maketitle

\begin{abstract}
Recent studies have shown that combining parameter-efficient fine-tuning (PEFT) with mixture-of-experts (MoE) is an effective strategy for adapting large language models (LLMs) to the downstream tasks. However, most existing approaches rely on conventional TopK routing, which requires careful hyperparameter tuning and assigns a fixed number of experts to each token. In this work, we propose LD-MoLE, a \textbf{L}earnable \textbf{D}ynamic routing mechanism for Mixture of LoRA Experts that enables adaptive, token-dependent, and layer-wise expert allocation. Our method replaces the non-differentiable TopK selection with a differentiable routing function and a closed-form solution. Moreover, our design allows the model to adaptively determine the number of experts to activate for each token at different layers. In addition, we introduce an analytical sparsity control objective to regularize the number of activated experts. Extensive experiments on the Qwen3-1.7B and Llama-3.2-3B models show that LD-MoLE achieves the highest average scores compared to state-of-the-art baselines, across a diverse set of benchmarks. Our method not only achieves superior performance, but also demonstrates the ability to learn token-dependent and layer-wise expert allocation. The code is available at: \url{https://github.com/eshentw/LD-MoLE}.
\end{abstract}

\section{Introduction}
Large language models (LLMs) have demonstrated impressive capabilities across a wide range of natural language processing (NLP) tasks. However, their growing size requires significant computational resources for full-parameter fine-tuning. To address this, Parameter-Efficient Fine-tuning (PEFT) methods, such as Adapter-tuning~\citep{houlsbyParameterEfficientTransferLearning2019a} and LoRA~\citep{LoRA}, have emerged as crucial techniques for reducing training costs.

Recently, the Mixture-of-Experts (MoE) design~\citep{originalMoE,shazeerOutrageouslyLargeNeural2017a} has been successfully integrated into transformer feed-forward networks during LLMs pretraining~\citep{daiDeepSeekMoEUltimateExpert2024,yangQwen3TechnicalReport2025}, demonstrating that MoE can reduce computational cost while maintaining strong performance. This has inspired a promising direction for PEFT, leading to the Mixture of LoRA Experts (MoLE) framework~\citep{MoLE,douLoRAMoEAlleviateWorld2024,zadouri2023pushingmixtureexpertslimit}. MoLE utilizes multiple LoRAs as experts, providing a scalable and efficient alternative to relying on a single LoRA -- where high-rank configurations risk overfitting and increased compute cost~\citep{zhang2023adaloraadaptivebudgetallocation}, while low-rank ones often underperform~\citep{liaoHMORAMAKINGLLMS2025,gaoHigherLayersNeed2024}.


Despite substantial advances, many recent MoE variants remain constrained by rigid routing strategies. A prominent example is MoLA~\citep{gaoHigherLayersNeed2024}, which relies on conventional TopK routing. This approach forces every token to consult a fixed number of experts, introducing a manually tuned hyperparameter that prevents adaptive allocation of resources based on token complexity. In addition, the discrete and non-differentiable nature of the TopK operator hinders end-to-end optimization, ultimately limiting both performance and scalability~\citep{shazeerOutrageouslyLargeNeural2017a, ST-MoE, ReLUMoE}. Recent efforts such as ReMoE~\citep{ReLUMoE} attempt to bypass this bottleneck by replacing TopK with a ReLU-based router, but this dynamic scheme can suffer from instability, as some tokens may be routed to no experts at all, degrading overall performance. Taken together, these limitations highlight a central challenge: \textit{Can we design a routing mechanism that adaptively learns to allocate experts in a stable and differentiable way?}

In this work, we propose LD-MoLE (see Figure~\ref{fig:structure}), a \textbf{L}earnable and \textbf{D}ynamic routing method to adaptively control LoRA experts allocation. 
We adopt Sparsegen~\citep{sparsegen} as the projection onto the probability simplex and propose a dynamic routing mechanism and the corresponding training pipeline that has the following benefits: (1) the closed-form formulation with Sparsegen to decide routing probability ensures differentiability and guarantees that every token is assigned to at least one expert; (2) the routing admits a well-defined subgradient; (3) the derivative of the routing is upper-bounded, facilitating stable optimization; and (4) the routing design supports sparse yet controllable allocations. Building on this foundation, we introduce a lightweight, shared multi-layer perceptron (MLP) that predicts a token-specific sparsity parameter $\lambda$, governing expert selection. In addition, we formulate a sparsity control objective derived from Sparsegen’s analytical solution, enabling direct regularization over the number of activated experts.


We conduct extensive experiments to validate the effectiveness of LD-MoLE. Specifically, we adopt Llama-3.2-3B and Qwen3-1.7B as base LLMs and fine-tune them on a wide range of instruction-tuning and sequence classification benchmarks. LD-MoLE achieves the best performance across these benchmarks, outperforming prominent baselines that follow different routing strategies --MoLA~\citep{gaoHigherLayersNeed2024} with conventional TopK routing and ReMoE~\citep{ReLUMoE} with ReLU-based dynamic routing.These results indicate that our learnable routing mechanism yields consistent improvements across tasks and architectures.
 Moreover, we show that our sparsity control loss effectively reduces the number of activated experts without compromising performance.


Our contributions are threefold:
\begin{enumerate}
\item We propose LD-MoLE, a novel MoLE framework with an end-to-end learnable dynamic routing mechanism that adaptively allocates experts to tokens across layers.
\item We introduce an analytical sparsity loss, derived from the closed-form solution of Sparsegen, to explicitly regulate the number of activated experts.
\item We conduct comprehensive experiments on Llama-3.2-3B, Llama-3.2-8b and Qwen3-1.7B, including ablation studies and detailed analyses, to demonstrate the effectiveness of LD-MoLE and to elucidate the mechanisms behind its improvements over TopK and ReLU-based routing. 
\end{enumerate}

\begin{figure}[t]
    \centering
    \includegraphics[width=0.9\linewidth, trim=4cm 2cm 4cm 4cm, clip]{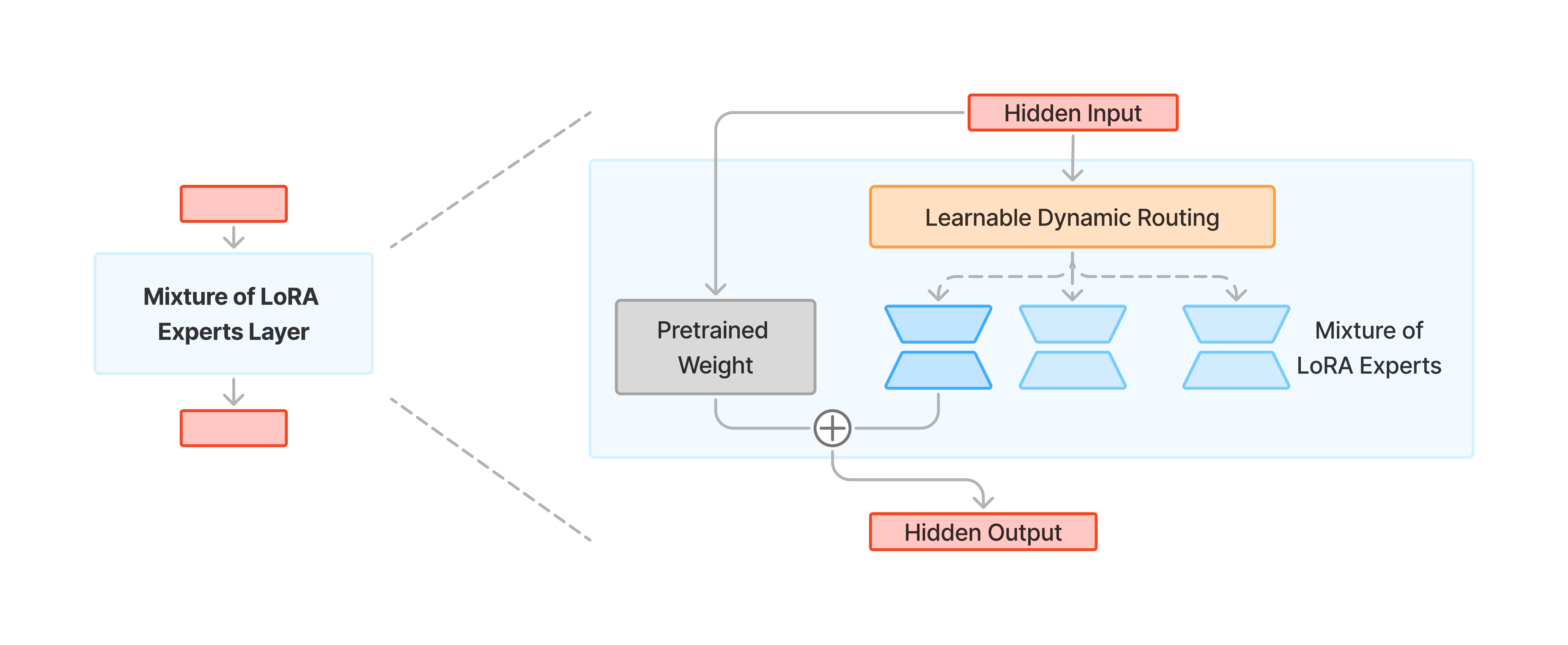}  \vspace*{-15pt}
    \caption{The overview of the LD-MoLE architecture, which enables Learnable Dynamic Routing (details in Section 3 and Fig~\ref{fig:framework} (c)) for LoRA adapters with the Mixture-of-Experts paradigm.}
    
    \label{fig:structure}
    \vspace{-5pt}
\end{figure}
\vspace{-10pt}
\section{Related Work}
\paragraph{Mixture of Experts.}
MoE was first introduced in the 1990s~\citep{originalMoE} and later applied to large-scale neural networks~\citep{shazeerOutrageouslyLargeNeural2017a} to efficiently scale up model capacity. Landmark models like Google's GShard~\citep{GShard} implement sparse MoE frameworks with Top2 expert routing, while Switch Transformer~\citep{fedus2022switchtransformersscalingtrillion} simplifies this to a single expert per token to reduce overhead. Today, MoE has been widely adopted in several well-known large language models, including GLaM~\citep{du2022glam}, Mixtral-8x7B~\citep{jiang2024mixtralexperts},
DeepSeekMoE~\citep{daiDeepSeekMoEUltimateExpert2024}, Qwen3~\citep{yangQwen3TechnicalReport2025} and LongCat-Flash~\citep{meituanlongcatteam2025longcatflashtechnicalreport}.

\paragraph{Routing Approaches in MoE.}
Various routing strategies have been proposed for expert selection. The most common is TopK routing~\citep{shazeerOutrageouslyLargeNeural2017a}, where each token selects a fixed number of experts. There are also several works that discuss variants of TopK. AdaMoE~\citep{zeng2024adamoetokenadaptiveroutingnull} uses conventional TopK routing with k larger than in vanilla MoE but achieves token-adaptive expert selection by incorporating null experts, which are defined as an empty operation. Ada-K Routing~\citep{yue2024adakroutingboostingefficiency} introduces an allocator and then obtains $k^*$ for customized expert resource allocation instead of fixed TopK through a non-differentiable sampling operation with a RL-based optimization framework. Alternative designs, such as expert-choice routing~\citep{zhou2022mixtureofexpertsexpertchoicerouting} reverse this perspective by allowing experts to select tokens. Beyond fixed-$k$ approaches, several methods aim to enable dynamic routing. For instance, TopP routing~\citep{huang2024hardertasksneedexperts} selects experts until a cumulative probability threshold is reached, while DYNMOE~\citep{guoDynamicMixtureExperts2025} introduces Top-Any Gating to eliminate the need for tuning $k$. Soft MoE~\citep{puigcerverSparseSoftMixtures2024a} merges tokens and assigns them to experts as linear combinations, and Lory~\citep{zhongLoryFullyDifferentiable2024a} proposes a fully differentiable routing mechanism but underperforms TopK routing. Closest to our work, ReMoE~\citep{ReLUMoE} employs ReLU-based routing for differentiable and dynamic selection.

\paragraph{Mixture of LoRA Experts.}
Combining multiple LoRA modules~\citep{LoRA} with MoE structure has led to the Mixture of LoRA Experts framework~\citep{MoLE}. Several variants have since been proposed: LoRAMoE~\citep{douLoRAMoEAlleviateWorld2024} introduces MoE-style plugins to enhance downstream performance while mitigating knowledge forgetting; HMoRA~\citep{liaoHMORAMAKINGLLMS2025} employs a hybrid scheme that hierarchically integrates token-level and task-level routing. MixLoRA~\citep{liMixLoRAEnhancingLarge2024a} builds a resource-efficient sparse MoE from LoRA modules. Other works, such as MoLA~\citep{gaoHigherLayersNeed2024} and AlphaLoRA~\citep{AlphaLoRA}, analyze expert allocation patterns across layers. In this work, we introduce LD-MoLE, which integrates the fully differentiable Sparsegen formulation with a learned MLP to predict $\lambda$, enabling end-to-end dynamic expert routing. 
\vspace{-5pt}
\section{Approach}
\vspace{-5pt}
As illustrated in Figure~\ref{fig:framework}, we introduce a learnable dynamic routing mechanism that adaptively selects experts. Traditional MoE models (a) employ TopK routing, where each token is assigned to a fixed number of experts according to its top softmax scores. In contrast, our method (b) employs a closed-form routing formulation involving a token-dependent sparsity factor $\lambda$, predicted by a lightweight shared MLP, that controls the projection function, and thereby regulates the number of activated experts.
This design enables the model to allocate more experts to tokens that demand greater modeling capacity and fewer to those that are easier to represent, effectively balancing efficiency and expressivity.

\begin{figure}[h]
    \vspace{-10pt}
    \centering
    \includegraphics[width=0.9\linewidth, trim=2cm 4cm 1cm 4cm, clip]{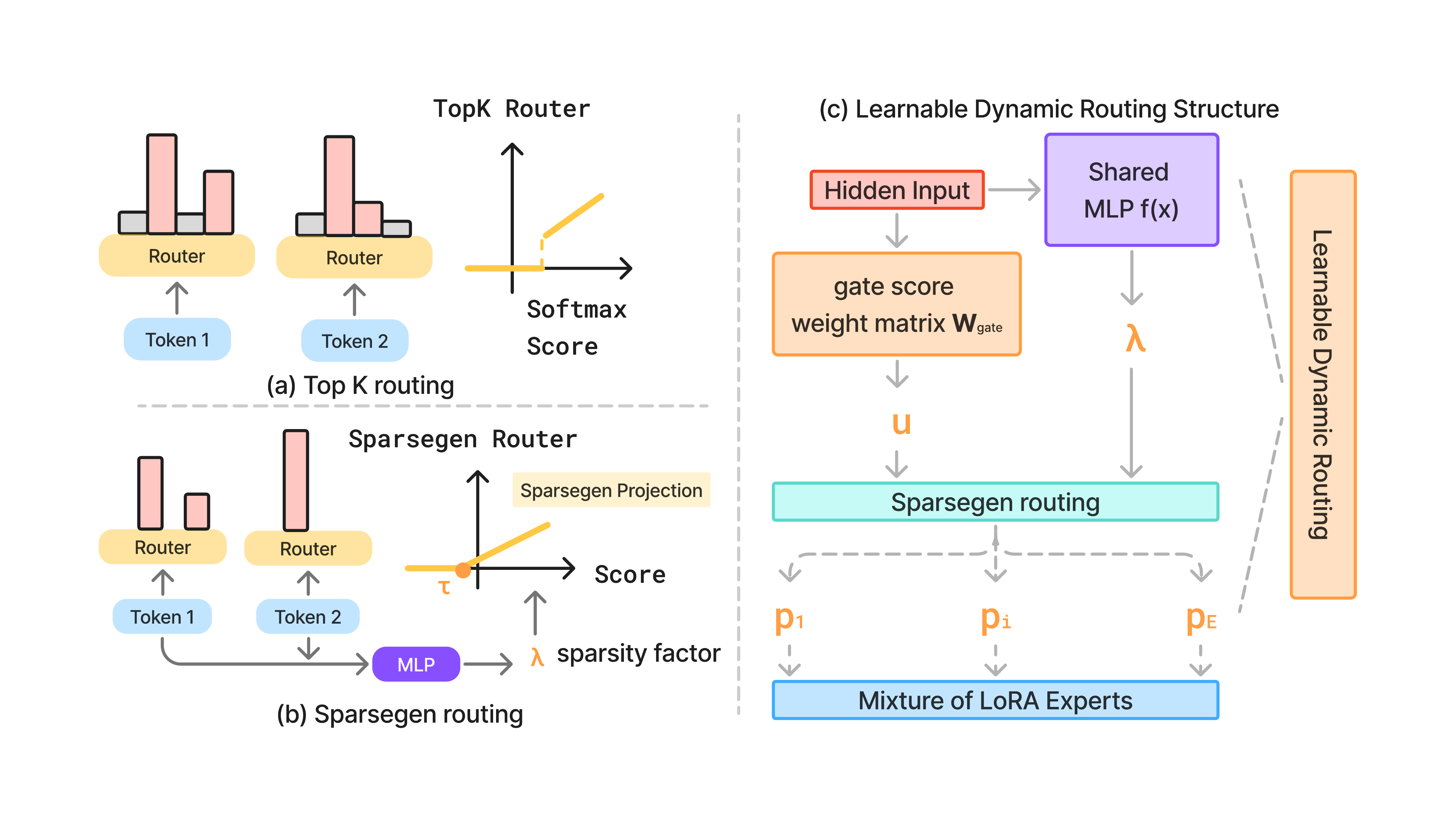} \vspace*{-8pt}
    \caption{
(a) Standard TopK routing activates a fixed number ($K$) of experts using non-differentiable selection.
(b) Sparsegen routing introduces a differentiable projection onto the probability simplex, controlled by a sparsity parameter $\lambda$, which enables adaptive expert selection.
(c) In the Sparsegen routing module, for each token, a lightweight shared MLP predicts the sparsity factor $\lambda$. Together with the logits $\mathbf{u}$, $\lambda$ determines the probability simplex $\mathbf{p}$ over LoRA experts, enabling dynamic, token-dependent expert allocation across layers. The detailed mathematical formulation is provided in Section~\ref{sec:sparse_gen}.
}
    \label{fig:framework}
\end{figure}

\subsection{The \proposed Architecture}\label{sec:sparse_gen}
In this section, we first review the TopK routing, then present our proposed Sparsegen routing with dynamic expert allocation, and finally describe how it is combined with LoRA to form the complete \proposed architecture.  

\paragraph{TopK Routing.}
The TopK router in a MoE layer determines the assignment of each token to the most suitable $k$ experts. In general, TopK routing computes a softmax distribution over the experts and calculates a weighted sum of the largest $k$ experts.

Formally, let $E$ and $d$ be the number of experts and input dimension respectively. We define the gate score weight matrix $\mW_{\text{gate}} \in \mathbb{R}^{d \times E}$ and logits $\mathbf{u} \in \mathbb{R}^E$. The conventional TopK routing method takes token embedding $\mathbf{x}$ as inputs to predict the scores assigned to each expert:
\begin{equation}\label{eq:linear_proj}
    \mathbf{u} = \mW_{\text{gate}} \vx \in \mathbb{R}^E,
\end{equation}
Define $\mathcal{S}_k(\mathbf{u})$ as the index set of the TopK largest entries of $\mathbf{u}$. 
The routing output $\vp \in \mathbb{R}^E$ is then given by
\[
\vp_i =
\begin{cases}
\displaystyle \frac{\exp(\vu_i)}{\sum_{j \in \mathcal{S}_k(\mathbf{\vu})} \exp(\vu_j)}, & i \in \mathcal{S}_k(\mathbf{\vu}), \\[1.2ex]
0, & \text{otherwise},
\end{cases}
\]
which yields a sparse probability vector with at most $k$ nonzero entries. Note that the selection operator $\mathcal{S}_k(\cdot)$ introduces a jump discontinuity at the $k$-th largest value. Consequently, an arbitrarily small perturbation of the router scores can change the selected set and induce an abrupt change in the gradient, rendering the routing function non-differentiable at these boundaries.

Despite the success of conventional TopK routing with softmax operation in improving training and inference efficiency, two limitations persist \citep{guoDynamicMixtureExperts2025,ReLUMoE}: (1) TopK routing is non-differentiable during the learning process. (2) The value of $k$ requires carefully tuned to optimize model performance and would be fixed throughout the training process ~\citep{guoDynamicMixtureExperts2025}.

In contrast, Sparsegen~\citep{sparsegen} produces sparse routing weights via a closed-form projection, avoiding discrete TopK selection and yielding well-defined gradients that better align optimization with the routing behavior.

\paragraph{Learnable Dynamic Routing.}
To address the aforementioned limitations, we propose a  \textit{learnable dynamic} routing mechanism based on Sparsegen~\citep{sparsegen}, which is a projection onto the probability simplex that generates sparse outputs via a closed-form and fully differentiable solution. Given the score vector $\vu$ in Eq.~\ref{eq:linear_proj}, the routing function adaptively determines the effective number of activated experts by solving a closed-form transformation where 
$\lambda$ is a sparsity scalar:  
\begin{equation} \label{eq:sparsegen}
    \vp   = \underset{\vp \in \mathbb{R}^E}{\operatorname{argmin}} \,\, \|\vp - \vu \|_2^2 - \lambda\|\vp\|_2^2, 
    \qquad \text{s.t. } \vp \geq 0,\;\mathbf{1}^\top \vp = 1,\lambda<1,
\end{equation}
In our work, we introduce a lightweight MLP to predict a token-wise sparsity factor to control the degree of sparsity in the expert allocation, where $f$ denotes the shared MLP that produces a scaling coefficient $\lambda$ conditioned on $\vx$:
\begin{equation}
    \lambda = f(\vx) \in \mathbb{R},
\label{eq:learnable_lambda}
\end{equation}
The proposed routing function admits the following closed form.
\begin{proposition}[Closed-form Sparsegen routing: Proposition 0.1 in~\citep{sparsegen}]\label{prop:closed}
Let $\vu \in \mathbb{R}^E$ in Eq.~\ref{eq:linear_proj} be the expert scores associated with token $\vx$, and let $\vu_{(1)}\ge \cdots\ge \vu_{(E)}$ be the sorted
coordinates of $\vu$.
Define the cumulative sums $U_k = \sum_{i=1}^k \vu_{(i)}$ for $k=1,\dots,E$. 
Then the Sparsegen routing distribution $\vp \in \mathbb{R}^E$ with sparsity parameter $\lambda \in (-\infty, 1)$ is given by
\begin{equation}\label{eq:closed}
    \vp_i = \left[ \frac{\vu_i - \tau}{1 - \lambda} \right]_+, \quad \forall i \in [E],
\end{equation}
where $[x]_+ = \max(x,0)$, and the threshold $\tau$ is determined as
\begin{equation}\label{eq:tau}
    \tau = \frac{U_k - 1 + \lambda}{k}, \quad k = \max\{k \in [E]\, | \,1-\lambda + k \vu_{(k)} > U_k\}
\end{equation}
such that $\vp$ lies on the probability simplex, i.e., $\sum_{i=1}^E \vp_i = 1$. 
\end{proposition}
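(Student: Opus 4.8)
The plan is to derive the closed form by writing down the KKT conditions for the constrained quadratic program in Eq.~\ref{eq:sparsegen} and solving them explicitly. First I would rewrite the objective: $\|\vp - \vu\|_2^2 - \lambda\|\vp\|_2^2 = (1-\lambda)\|\vp\|_2^2 - 2\vu^\top\vp + \|\vu\|_2^2$, so up to the additive constant $\|\vu\|_2^2$ and the positive factor $(1-\lambda)$ (here $\lambda<1$ is essential, since it keeps the problem strictly convex), minimizing is equivalent to minimizing $\tfrac12\|\vp\|_2^2 - \tfrac{1}{1-\lambda}\vu^\top\vp$ subject to $\vp\ge 0$, $\vone^\top\vp=1$. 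This is exactly a Euclidean projection of the point $\vu/(1-\lambda)$ onto the probability simplex, so the whole statement reduces to the classical simplex-projection formula.

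Next I would introduce a Lagrange multiplier $\tau$ for the equality constraint $\vone^\top\vp=1$ and multipliers $\mu_i\ge 0$ for $\vp_i\ge 0$. Stationarity gives $(1-\lambda)\vp_i - \vu_i + \tau - \mu_i = 0$, and complementary slackness $\mu_i \vp_i = 0$ forces, coordinate by coordinate, either $\vp_i = (\vu_i - \tau)/(1-\lambda) \ge 0$ (when the constraint is inactive) or $\vp_i = 0$ with $\mu_i = \tau - \vu_i \ge 0$. Combining the two cases yields the thresholded expression $\vp_i = [(\vu_i-\tau)/(1-\lambda)]_+$ of Eq.~\ref{eq:closed}, and convexity together with Slater's condition guarantees that these KKT conditions are both necessary and sufficient for the unique optimum.

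The remaining—and slightly more delicate—step is to pin down the threshold $\tau$, i.e.\ to justify Eq.~\ref{eq:tau}. I would argue that the support of $\vp$ consists of the top $k$ coordinates of $\vu$ for some $k$: if $\vu_i > \vu_j$ then $\vp_i \ge \vp_j$, so the active set is always a prefix of the sorted order. Plugging $\vp_{(i)} = (\vu_{(i)}-\tau)/(1-\lambda)$ for $i\le k$ into $\sum \vp_i = 1$ and solving gives $\tau = (U_k - (1-\lambda))/k = (U_k - 1 + \lambda)/k$. Then I would show that the correct $k$ is the largest index for which the resulting $\tau$ still leaves $\vu_{(k)}$ strictly above threshold, i.e.\ $(\vu_{(k)}-\tau)/(1-\lambda) > 0$, which after substituting $\tau$ rearranges to $1-\lambda + k\,\vu_{(k)} > U_k$—matching Eq.~\ref{eq:tau}. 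A short monotonicity argument (the quantity $1-\lambda + k\vu_{(k)} - U_k$ is nonincreasing in $k$ once it turns nonpositive, because successive differences are $k(\vu_{(k+1)}-\vu_{(k)})\le 0$) confirms that ``largest such $k$'' is well defined and selects precisely the consistent solution.

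The main obstacle I anticipate is the last part: cleanly verifying that this particular $k$ yields a $\vp$ that is simultaneously nonnegative on the chosen support, zero off it (which requires checking $\tau \ge \vu_{(k+1)}$), and feasible—rather than just asserting it. This is the only place where one must be careful about strict versus non-strict inequalities and about the edge cases $k=E$ and ties among the $\vu_{(i)}$; everything else is a routine convex-duality computation. Since the statement is quoted verbatim from \citep{sparsegen}, one option is simply to cite that reference for this verification, but for completeness I would include the one-line monotonicity argument above.
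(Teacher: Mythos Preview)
Your proposal is correct and follows exactly the route the paper indicates: the paper's own proof is a one-line pointer stating that the result follows from solving the strongly convex Sparsegen projection onto the simplex and citing \citep{sparsegen}, which is precisely the KKT derivation you spell out in full. Your write-up is thus a faithful (and more detailed) expansion of what the paper defers to the original reference; the minor point that $g(k)=1-\lambda+k\vu_{(k)}-U_k$ is in fact nonincreasing for \emph{all} $k$ (not just once it turns nonpositive) only strengthens your monotonicity argument.
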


\begin{proof}
This result follows from solving the Sparsegen projection problem, which minimizes a strongly convex objective subject to simplex constraints~\citep{sparsegen}. 
\end{proof}

As shown in Figure~\ref{fig:framework}, the sparsity factor $\lambda$ and the input logits $\vu$ jointly determine the threshold $\tau$, which defines the change point of the differentiable routing function. Intuitively, $\lambda$ controls the tendency toward sparsity in the solution. As $\lambda \to 1^-$, it pushes the distribution toward the simplex corners (sparse), while as $\lambda \to -\infty$, it drives the solution toward uniform simplex.
Furthermore, we establish a key property of Sparsegen relevant to our setting:
\begin{lemma}[Sparsegen selects at least one expert.]
Let $\vu \in \mathbb{R}^E$ and $\lambda < 1$. The sparsegen solution~\eqref{eq:sparsegen}
always has nonempty support: $\|\vp\|_0 \geq 1$.
\label{lemma:activate one}
\end{lemma}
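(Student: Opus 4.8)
The plan is to read the conclusion off the simplex constraint that the Sparsegen solution satisfies, with the hypothesis $\lambda<1$ doing all the real work by guaranteeing that the underlying projection problem is well posed.

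First I would rewrite the objective of Eq.~\ref{eq:sparsegen} as
\[
\|\vp-\vu\|_2^2-\lambda\|\vp\|_2^2=(1-\lambda)\|\vp\|_2^2-2\vu^\top\vp+\|\vu\|_2^2 ,
\]
which is strongly convex in $\vp$ precisely because $1-\lambda>0$. The feasible set $\{\vp:\vp\ge 0,\ \mathbf{1}^\top\vp=1\}$ is nonempty (it contains each standard basis vector) and compact, so the minimizer $\vp$ exists, is unique, and — being feasible — obeys $\mathbf{1}^\top\vp=1$. From here the lemma is immediate: were the support of $\vp$ empty we would have $\vp=\vzero$ and hence $\mathbf{1}^\top\vp=0\neq 1$, a contradiction; therefore some coordinate of $\vp$ is strictly positive, i.e.\ $\|\vp\|_0\ge 1$.

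As an independent check I would confirm the same fact directly from the closed form of Proposition~\ref{prop:closed}. The index $k=1$ always lies in the set $\{k\in[E]:1-\lambda+k\vu_{(k)}>U_k\}$, since for $k=1$ that inequality is exactly $\lambda<1$; hence the $\max$ defining $k$ in Eq.~\ref{eq:tau} is over a nonempty set and $k\ge 1$ is well defined. The defining inequality for this $k$ rearranges to $k\vu_{(k)}>U_k-1+\lambda=k\tau$, so $\vu_{(k)}>\tau$, and therefore $\vu_{(1)}\ge\vu_{(k)}>\tau$. Since $1-\lambda>0$, the largest coordinate satisfies $\vp_{(1)}=\left[(\vu_{(1)}-\tau)/(1-\lambda)\right]_+>0$, again giving $\|\vp\|_0\ge 1$.

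The only delicate point — the ``obstacle,'' such as it is — is recognizing that every step above leans on the strict inequality $\lambda<1$: it is simultaneously what makes the objective strongly convex (so a feasible, hence nonzero, minimizer exists) and what makes $k=1$ admissible in the threshold rule. For $\lambda\ge 1$ the objective is no longer strongly convex and the claim can fail, so the hypothesis is genuinely needed and should be stated explicitly rather than glossed over.
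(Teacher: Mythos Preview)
Your proof is correct and follows essentially the same two routes as the paper: the simplex constraint $\mathbf{1}^\top\vp=1$ forces $\vp\neq\vzero$, and the closed-form check that $k=1$ is always admissible since $1-\lambda>0$. The only differences are cosmetic --- you add an explicit existence argument via strong convexity and compactness, while the paper additionally treats the boundary case $\lambda=1$ (where the problem degenerates to a linear program with a one-hot minimizer), which lies outside the lemma's stated hypothesis anyway.
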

We provide a full proof for this lemma in Appendix~\ref{ap:sparsegen_proof}.
Overall, the closed-form formulation in Proposition~\ref{prop:closed} offers both theoretical and practical advantages. 
It enables efficient computation of routing distributions and introduces a tunable sparsity factor $\lambda$, 
which allows the model to adaptively select a dynamic number of experts. 
Importantly, the routing remains fully differentiable, ensuring compatibility with end-to-end training.

\paragraph{Model Layout.}\label{model:sharemlp}
In our work, we incorporate parameter-efficient LoRA adaptation into the MoE architecture. Each expert network is a LoRA module, where instead of updating the full weight matrix $\mW_i \in \mathbb{R}^{d_{\text{out}} \times d_{\text{in}}}$, 
a low-rank update is introduced:
\begin{equation}
    \Delta \mW_i = \mA_i \mB_i, 
    \quad \mA_i \in \mathbb{R}^{d_{\text{out}} \times r}, \;
    \mB_i \in \mathbb{R}^{r \times d_{\text{in}}},
\end{equation}
with $r \ll \min(d_{\text{out}}, d_{\text{in}})$. To adaptively capture non-linear relationships between token-level features, we employ a $\lambda_t$~\eqref{eq:learnable_lambda} predicted by a shared MLP in Fig~\ref{fig:framework} with $t=1,\dots,T$ for a sequence of $T$ tokens. The $\vx_{t}$ is the token feature and would be the input for the shared MLP in Eq.~\ref{eq:learnable_lambda}. For each unique input size, we instantiate a single MLP, shared among all layers with that dimensionality. The shared MLP structure greatly reduces the number of additional parameters required while still allowing the router to predict $\lambda_t$ dynamically. 
This design decouples the parameter cost of predicting $\lambda_t$ from both the number of layers and the number of experts, leaving it dependent solely on the set of unique input dimensions. Given $\lambda_{t}$, the proposed router generates the routing weights $\vp_{t}$~\eqref{eq:closed} for each token and determines the weighted aggregation of the output embedding $\vh_{t}$ from the LoRA-augmented experts:
\begin{equation}
\vh_{t}
= \mW_{\text{base}} \vx_{t}
+ \sum_{i=1}^{E} \vp_{t,i} \big(\mA_i \mB_i \vx_{t} \big).
\end{equation}

Our framework also remains flexible: alternative MLP structures can be adopted, and we investigate a local variant in Appendix~\ref{ap:more ablation}.

\subsection{Training Loss}
In this work, we adopt the standard cross-entropy loss for the Language Model (LM) in both next-token prediction and sequence classification tasks \citep{OpenMoE,liaoHMORAMAKINGLLMS2025, MoLE, douLoRAMoEAlleviateWorld2024}. 
Formally, this could be expressed as
\begin{equation}
\mathcal{L}_{\text{LM}}
= - \sum_{i=1}^{n+m} M_i \, 
\log P_{\text{LM}}\!\left(x_i \mid x_{<i}\right),
\end{equation}
In this formulation, $X = (x_1, \dots, x_{n+m})$ denotes the concatenation of the input sequence and target sequence with length $n$ and $m$ respectively. $M_i \in \{0,1\}$ is a binary mask that specifies whether the $i$-th token contributes to the loss. In particular, $M_i=0$ for tokens belonging to the input sequence(ignored during optimization), and $M_i=1$ for tokens in the target sequence. This ensures that the model is trained to predict only the target tokens conditioned on both the input prompt and the previously generated target tokens, while not penalizing predictions over the input context.

To further stabilize the training process, we incorporate the conventional load-balancing loss 
\citep{fedus2022switchtransformersscalingtrillion,yangQwen3TechnicalReport2025,daiDeepSeekMoEUltimateExpert2024}, 
which mitigates the risk of routing collapse \citep{shazeerOutrageouslyLargeNeural2017a}. 
Such collapse can also arise in LoRA-augmented expert settings during fine-tuning, where only a few experts dominate the token assignments. Additionally, we introduce a sparsity loss that leverages the closed-form nature of our routing to directly regulate the sparsity level. In the following, we present the mathematical formulation of both the load-balancing loss and the proposed sparsity loss in detail.

\subsubsection{Load Balancing Loss}
Given $E$ experts indexed by $i=1$ to $E$ and a batch $\mathcal{B}$ with $T=n+m$ tokens, the auxiliary loss is computed as the scaled dot-product between vectors $\mathcal{P}$ and $\mathcal{P}$,
\begin{equation}\label{eq:load_balance_loss}
\mathcal{L}_{\text{lb}}  = E \cdot \sum_{i=1}^{E} \mathcal{F}_i \cdot \mathcal{P}_i
\end{equation}
where $\mathcal{F}_i$ is the fraction of tokens dispatched to expert $i$, and $\mathcal{P}_i$ is the fraction of the router probability allocated for expert $i$,
\begin{equation}
\mathcal{F}_i = \frac{1}{T} \sum_{x \in \mathcal{B}} \mathbf{1}\{\text{Token } t \text{ selects Expert } i\}, \quad \mathcal{P}_i = \frac{1}{T} \sum_{x \in \mathcal{B}} \vp_i(x).
\end{equation}

This objective encourages both $\mathcal{F}=(\mathcal{F}_1,\dots,\mathcal{F}_E)$ and $P=(P_1,\dots,P_E)$ to approach a uniform distribution. 
In the ideal case of perfect balance, each expert receives an equal share, i.e., $\mathcal{F}_i=P_i=1/E$ for all $i$, 
which minimizes Eq.~\ref{eq:load_balance_loss}. 
By penalizing concentration of both token assignments ($\mathcal{F}
_i$) and router probabilities ($P_i$) on a small subset of experts, 
this simple yet effective loss plays a crucial role in ensuring stable and efficient MoE training.

\subsubsection{Controlling Sparsity with Sparsity Loss}
\label{sec:sparsity loss}
The proposed routing mechanism enables explicit control over sparsity via the predicted factor $\lambda$. 
To achieve a desired number of activated experts, we introduce a sparsity loss that regularizes $\lambda$ toward values corresponding to the target sparsity level.  

Suppose we aim for exactly $k$ experts to be activated for a given token. 
From Proposition~\ref{prop:closed}, this requires that the $k$-th largest score satisfies $\vu_{(k)} > \tau$ while the $(k+1)$-th largest score satisfies $\vu_{(k+1)} \leq \tau$. 
This condition uniquely determines the target value range of $\lambda$ that yields $k$ activated experts. 
We formalize this in Proposition~\ref{prop:sparsity_lam}, which gives an analytical
range of $\lambda$ that yields exactly $k$ activated experts.

\begin{proposition}[$k$ expert activation]\label{prop:sparsity_lam}
Let $f(\vx)=\vu \in\mathbb{R}^E$ and let $\vu_{(1)}\ge \cdots\ge \vu_{(E)}$ be the sorted
coordinates of $\vu$, with $U_k$ defined as in Proposition~\ref{prop:closed}.  
Then exactly $k$ experts are activated, i.e.,
\begin{equation}
    \vp_{(i)}>0, i\le k, \quad \text{and}  \quad \vp_{(i)}=0, i>k, 
\end{equation}
 if and only if the sparsity factor $\lambda$ lies in the interval
\begin{equation}
\lambda \in \Big[\,1-\big(U_k-k\,\vu_{(k+1)}\big)\ ,\ 1-\big(U_k-k\,\vu_{(k)}\big)\,\Big),
\quad 1\le k\le E-1.
\label{eq:lambda-interval}
\end{equation}
For $k=E$, the condition reduces to
\begin{equation}
\lambda \in \big(-\infty,\ 1-\big(U_E-E\,\vu_{(E)}\big)\big).
\end{equation}
\end{proposition}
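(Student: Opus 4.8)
The plan is to derive the stated $\lambda$-interval directly from the characterization of the threshold $\tau$ in Proposition~\ref{prop:closed}, by translating the support-size condition into inequalities on $\lambda$. The key observation is that, by Eq.~\eqref{eq:closed}, expert $i$ is activated precisely when $\vu_i > \tau$; hence exactly $k$ experts are activated if and only if $\vu_{(k)} > \tau \ge \vu_{(k+1)}$ (using the sorted coordinates, with the convention $\vu_{(E+1)} = -\infty$ for the boundary case $k=E$). So the task reduces to: express $\tau$ as a function of $\lambda$ when the active set has size exactly $k$, and then solve the double inequality $\vu_{(k+1)} \le \tau < \vu_{(k)}$ for $\lambda$.

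First I would fix $k$ and note that, on the event that exactly $k$ experts are active, the simplex constraint $\sum_i \vp_i = 1$ together with Eq.~\eqref{eq:closed} gives $\sum_{i=1}^k \frac{\vu_{(i)} - \tau}{1-\lambda} = 1$, i.e. $U_k - k\tau = 1 - \lambda$, which yields the explicit formula $\tau = \tau(\lambda) = \frac{U_k - 1 + \lambda}{k}$ — exactly Eq.~\eqref{eq:tau}. This is the crucial bridge: it writes $\tau$ affinely in $\lambda$ with positive slope $1/k$. Substituting into $\tau < \vu_{(k)}$ gives $\frac{U_k - 1 + \lambda}{k} < \vu_{(k)}$, i.e. $\lambda < 1 - \big(U_k - k\,\vu_{(k)}\big)$; substituting into $\tau \ge \vu_{(k+1)}$ gives $\lambda \ge 1 - \big(U_k - k\,\vu_{(k+1)}\big)$. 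Intersecting the two half-lines produces precisely the interval in Eq.~\eqref{eq:lambda-interval}. For $k=E$ only the upper bound survives (there is no $\vu_{(E+1)}$ to bound $\tau$ from below, so $\tau$ can be as negative as we like), giving the stated one-sided interval $\lambda \in (-\infty,\, 1 - (U_E - E\,\vu_{(E)}))$; one should also check this interval is consistent with the global constraint $\lambda < 1$, which holds since $U_E - E\,\vu_{(E)} = \sum_{i=1}^E(\vu_{(i)} - \vu_{(E)}) \ge 0$.

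The main subtlety — and the step I expect to require the most care — is the logical direction and well-definedness: I have argued that \emph{if} exactly $k$ experts are active then $\lambda$ lies in the claimed interval, but for the ``if and only if'' I also need that \emph{any} $\lambda$ in that interval actually produces an active set of size exactly $k$. This requires checking (i) that the $\tau$ defined by the affine formula with this $k$ is in fact the correct Sparsegen threshold (equivalently, that the $k$ chosen by the $\max$ rule in Eq.~\eqref{eq:tau} coincides with our $k$), and (ii) that with this $\tau$ one indeed has $\vu_{(k)} - \tau > 0 \ge \vu_{(k+1)} - \tau$, so that $\vp_{(i)} > 0$ for $i \le k$ and $\vp_{(i)} = 0$ for $i > k$ and the result is a genuine probability vector. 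Point (ii) is immediate by construction from the two inequalities we solved; point (i) follows because the active-set size is monotone in $\lambda$ (larger $\lambda$ pushes toward sparser solutions, as noted after Proposition~\ref{prop:closed}), so the intervals for consecutive values of $k$ tile $(-\infty,1)$ without overlap, and each $\lambda$ lands in exactly one of them. I would also remark on the edge case of ties in the coordinates of $\vu$: when $\vu_{(k)} = \vu_{(k+1)}$ the interval degenerates to the empty set, which is consistent — you cannot activate a set of size exactly $k$ when the $k$-th and $(k+1)$-th scores are equal, since the routing treats equal scores symmetrically.
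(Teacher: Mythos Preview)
Your proposal is correct and follows essentially the same approach as the paper: characterize ``exactly $k$ experts activated'' by the double inequality $\vu_{(k)} > \tau \ge \vu_{(k+1)}$, substitute the affine expression $\tau = (U_k - 1 + \lambda)/k$ from Eq.~\eqref{eq:tau}, and solve for $\lambda$. You are in fact more careful than the paper's own proof, which does not explicitly argue the converse direction (that every $\lambda$ in the interval yields support size exactly $k$), check consistency with $\lambda<1$, or discuss the degenerate tie case $\vu_{(k)}=\vu_{(k+1)}$.
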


\begin{proof}
    The result follows by characterizing the threshold $\tau$ in Proposition~\ref{prop:closed} 
and enforcing the conditions $\vu_{(k)} > \tau \geq \vu_{(k+1)}$. 
We provide the detailed derivation in Appendix~\ref{ap:proof_sparse}. 
\end{proof}
From Proposition~\ref{prop:sparsity_lam}, we define $\lambda_{\text{lower}}(k)$ as the lower bound of the interval in Eq.~\ref{eq:lambda-interval}. When the goal is to maintain the number of selected experts less than or equal to $k$, motivate $\lambda$ to remain within this interval (with only lower bound) during training by introducing a sparsity loss of the form:
\begin{equation}
    \mathcal{L}_{\text{sparse}} 
    = \mathrm{ReLU}\!\left(\lambda_{\text{lower}}(k) - \lambda\right)
\end{equation}
This loss penalizes $\lambda$ whenever it falls below the lower bound, while leaving it unchanged when $\lambda$ lies inside the feasible region.

Finally, altogether we optimize the following total loss objective with two coefficients $\alpha$ and $\beta$ that are hyperparameters to control the relative contribution of auxiliary losses:
\begin{equation}
\mathcal{L}_{\text{total}}
= \mathcal{L}_{\text{LM}}
+ \alpha \, \mathcal{L}_{\text{lb}}
+ \beta \, \mathcal{L}_{\text{Sparse}}.
\label{eq:lm_loss}
\end{equation}

\section{Experiments}\label{sec:experiments}
\subsection{Experiment Setup}
We evaluate our method by incorporating the MoE structure with Sparsegen routing in the Mixture of LoRA Experts setting to finetune the base model on various common benchmarks.

\textbf{Benchmarks and Metrics:} We evaluate the overall accuracy of our method against several baselines across a range of downstream tasks. Specifically, we test on standard NLP benchmarks, including instruction-finetuning datasets such as
\textbf{MMLU-Pro} \citep{wang2024mmluprorobustchallengingmultitask}, \textbf{ARC-Challenge}, \textbf{ARC-Easy} \citep{arc}, \textbf{OpenBookQA} \citep{openbookqa}, \textbf{CommonsenseQA} \citep{csqa}, \textbf{SWAG} \citep{swag}, \textbf{HellaSWAG} \citep{hellaswag}, as well as sequence classification tasks from GLUE: \textbf{CoLA}, and \textbf{RTE} \citep{glue}. For all benchmarks, we use standard accuracy as the evaluation metric. Please refer to Appendix~\ref{ap:dataset} for more detail of the dataset and setup. 

\textbf{Base Model and Baselines:} We test baseline approaches on different open-source LLMs, including Llama-3.2-3B, Llama-3.2-8b \citep{dubey2024llama} and Qwen3-1.7B \citep{yangQwen3TechnicalReport2025}. We compare our method primarily against MoLA~\citep{gaoHigherLayersNeed2024}, a TopK routing strategy within the MoLE framework, denoted as MoLA(8888). We also evaluate its proposed variant MoLA(2468), which assigns fewer experts to lower layers and progressively increases the allocation toward higher layers, reportedly yielding consistently better performance. In addition, we include ReMoLE, which adapts the ReLU-based routing from ReMoE~\citep{ReLUMoE} to the LoRA experts setting. Simliar to L2D-MoLE, ReMoLE supports both dynamic and differentiable routing.

\textbf{Implementation:} 
For our method, training is conducted on 4 NVIDIA H200 GPUs with a batch size of 16 for 10 epochs, with the learning rate of 0.0001 decayed by a factor of 0.1 at epochs 6 and 8. For Llama-3.2-8b, we run the experiment for only run 3 epochs as we observe a convergence at the early stage of training. We set the number of LoRA experts to 8, with rank 8 and scaling factor 16, and apply a dropout rate of 0.1. Across all methods, we pair the training with the load-balancing loss and follow the settings described in the original baseline papers. For MoLA, we choose the top 2 expert selections follows the original settings \citep{gaoHigherLayersNeed2024}. For ReMoLE, we employ the load-balancing objective function introduced in ReMoE\citep{ReLUMoE} with exact the same coefficients. More details of the method and experiment training setting are provided in Appendix~\ref{ap:hyperparam}.
\vspace{-5pt}
\subsection{Overall performance}
The overall performance of our proposed LD-MoLE is summarized in Table~\ref{tab:main result}. Across all tested configurations, LD-MoLE achieves the highest average scores on both the Llama-3.2-3B, Llama-3.2-8b and Qwen3-1.7B models, demonstrating the consistent benefits of its learned dynamic routing. For this comparison, we set $\alpha=1.0$ and disable the sparsity loss (i.e., $\beta=0$), as defined in Eq.~\ref{eq:lm_loss}. A detailed analysis of sparsity control is deferred to Section~\ref{sec:sparse control}. 

In particular, LD-MoLE outperforms both fixed and dynamic routing baselines. Compared to the fixed TopK routing of MoLA, our method excels on reasoning-heavy benchmarks, achieving average cross-model gains of over +3.5\% on ARC-E, SWAG, and HellaSWAG. For MMLU-Pro, LD-MoLE outperform the baselines in all testing models. On OpenBookQA, it achieves an average improvement of about +1.2\%, and on CommonsenseQA, it surpasses MoLA by more than +2.0\%. While MoLA attains slightly better results on certain sequence classification tasks such as RTE, LD-MoLE consistently delivers higher overall averages, underscoring the effectiveness of learnable dynamic routing across diverse task types. Compared to ReMoLE, LD-MoLE achieves higher overall averages, including +0.5\% on Llama3-2.3B and +0.6\% on Qwen3-1.7B. Notably, ReMoLE exhibits large performance drops on CoLA with Llama3-2.3B and RTE with Qwen3-1.7B, whereas LD-MoLE maintains stable effectiveness across benchmarks.

We observe that dynamic routing methods generally perform better on instruction fine-tuning tasks, while fixed routing approaches show slight advantages in certain sequence classification tasks. A possible explanation is that classification tasks often benefit from more uniform expert usage, where fixed routing ensures stable allocation. Interestingly, the pruned variant MoLA(2468) outperforms the standard MoLA(8888), suggesting that many experts in the fixed routing setup are underutilized, introducing redundancy as also noted in their work ~\citep{gaoHigherLayersNeed2024}. In contrast, dynamic routing adapts expert selection on a token-by-token basis, which benefits complex reasoning and instruction-following tasks but may introduce variability that is less advantageous for shorter classification settings. Overall, LD-MoLE provides a stronger balance between parameter efficiency and performance, adapting better effectiveness across diverse tasks.

\begin{table}[t]
  \centering
  \scriptsize
  \setlength{\tabcolsep}{4.5pt}
  \renewcommand{\arraystretch}{1.2}
  \begin{tabular}{l|l|c|ccccccccc|c}
    \toprule
    Method & Model & TP & MMLU\_P & ARC-C & ARC-E & Open & Comm & SWAG & Hella & CoLA & RTE & Avg \\
    \midrule
    MoLA(8888) & Llama3.1-8B  & 2.17 \% & 48.21 & 76.79 & 86.32 & 86.40 & \textbf{84.11} & 89.15 & 94.06 & \textbf{87.73} & 89.53 & 82.48 \\
    MoLA(2468) & Llama3.1-8B  & 1.26 \% & 50.37 & 77.90 & 87.37 & 87.60 & 83.87 & 86.97 & 94.03 &  87.25 & 90.25 & 82.85 \\
    ReMoLE & Llama3.1-8B  & 2.17 \% & 54.98 & 82.00 & \textbf{91.61} & 87.60 & 83.17 & 92.22 & 94.95  & 85.27 & 88.43 & 84.47 \\
    LD-MoLE & Llama3.1-8B  & 2.33 \% & \textbf{55.98} & \textbf{83.67} & \textbf{91.61} & \textbf{88.00} & 83.01 & \textbf{92.29} & \textbf{95.45} & 85.25 & \textbf{91.32} & \textbf{85.18} \\
    \midrule
    MoLA(8888) & Llama3.2-3B  & 3.11 \% & 40.31 & 71.57 & 83.51 & 81.00 & 79.77 & 83.56 & 87.47 & 85.81 & \textbf{90.61} & 78.18 \\
    MoLA(2468) & Llama3.2-3B  & 1.80 \% & 42.31 & 71.91 & 83.86 & 83.60 & 80.02 & 83.96 & 87.31 &  86.00 & 89.53 & 78.72 \\
    ReMoLE & Llama3.2-3B  & 3.11 \% & 48.01 & \textbf{75.25} & 89.30 & 83.40 & 79.52 & 90.45 & 93.44  & 83.95 & 89.46 & 81.42 \\
    LD-MoLE & Llama3.2-3B  & 3.28 \% & \textbf{49.58} & 74.58 & \textbf{89.47} & \textbf{84.00} & \textbf{81.42} & \textbf{91.37} & \textbf{93.60} & \textbf{86.02} & 88.38 & \textbf{82.05} \\
    \midrule
    MoLA(8888) & Qwen3-1.7B  & 4.12 \% & 51.12 & 76.59 & 88.60 &  82.40 & 76.49 & 84.11 & 83.35 &  \textbf{83.89} & 86.64 & 79.24 \\
    MoLA(2468) & Qwen3-1.7B  & 2.39 \% & 49.96 & 76.92 & 88.42 & 83.00 & 75.84 & 84.17 & 87.09 &  83.60 & 84.48 & 79.28 \\
    ReMoLE & Qwen3-1.7B & 4.12 \% & 53.40 & \textbf{79.60} & 91.75 & 84.80 & \textbf{79.44} & 86.37 & 88.00 & 82.12 & 83.74 & 81.02 \\
    LD-MoLE & Qwen3-1.7B & 4.23 \% & \textbf{53.82} & 78.67 & \textbf{92.11} & \textbf{85.00} & 79.30 & \textbf{86.72} & \textbf{88.71} &  82.61 & \textbf{87.72} &  \textbf{81.63}\\
    \bottomrule
  \end{tabular}
  \caption{Comparison between methods across downstream tasks.}
  \vspace{-10pt}
  \label{tab:main result}
\end{table}
\vspace{-3pt}
\subsection{Predicted $\lambda$ for dynamic expert allocation}\label{sec: learn lambda}
In this section, we compare the performance of the predicted $\lambda$ against fixed $\lambda$ values to demonstrate that the shared learnable MLP structure proposed for the prediction $\lambda$ achieves superior results in the setting of LoRA experts. We conduct experiments with Qwen3-1.7B as the base model and report results in Table~\ref{tab:learnable results}. We evaluate a range of fixed $\lambda$ values against our predicted ones. Recall our routing formulation~\eqref{eq:sparsegen} and Proposition~\ref{prop:closed}, the parameter $\lambda$ directly controls the sparsity of the probability distribution over LoRA experts.

We provide the visualization of $\lambda$ value distribution with 25–75 quantile range in Figure~\ref{fig:lambda_distribution} for K projection, gate projection and down projection module. We observe that the distribution of $\lambda$ varies substantially between layers. The value increases in magnitude and exhibits greater variance at deeper layers and that fixed $\lambda$ cannot capture this depth-wise variance. Motivated by this observation, we tested our predicted $\lambda$ against fixed values ranging from $0.5$ to $-10.0$. The results show that the predicted $\lambda$ consistently outperforms all fixed settings, demonstrating its ability to dynamically adapts across both layers and tokens. Naturally, a predicted $\lambda$ could flexibly adjusts without per-task or per-layer hyperparameter tuning thus yields improved performance against fixed $\lambda$.
\begin{figure}[htbp]
    \centering
    \begin{minipage}
        [b]{0.34\textwidth}
        \centering
        \includegraphics[width=\linewidth]{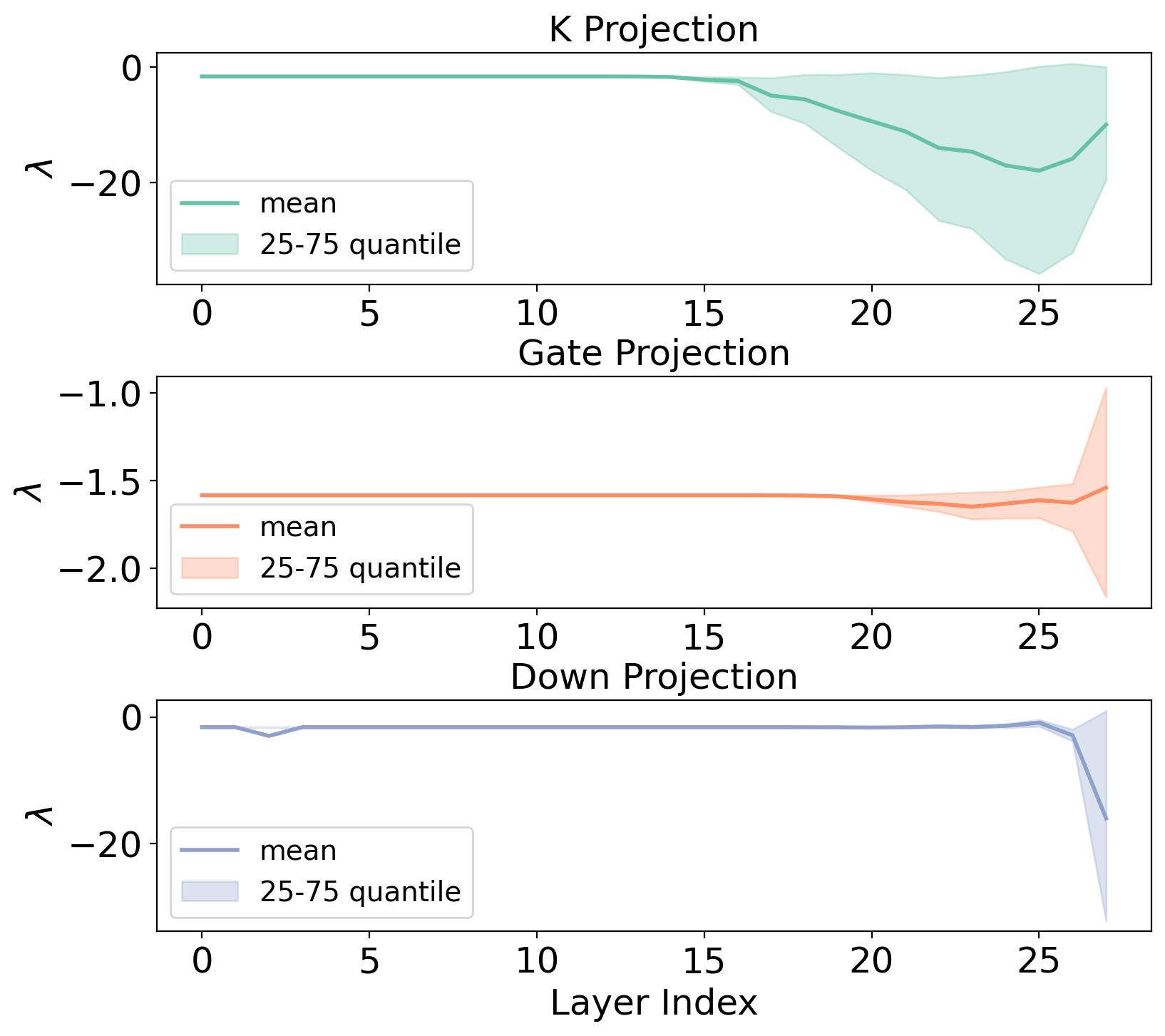}
        \caption{Layer-wise $\lambda$ values for K, gate, and down projections.}
        \label{fig:lambda_distribution}
    \end{minipage}
    \hfill 
    \begin{minipage}[b]{0.64\textwidth}
        \centering
        \includegraphics[width=\linewidth]{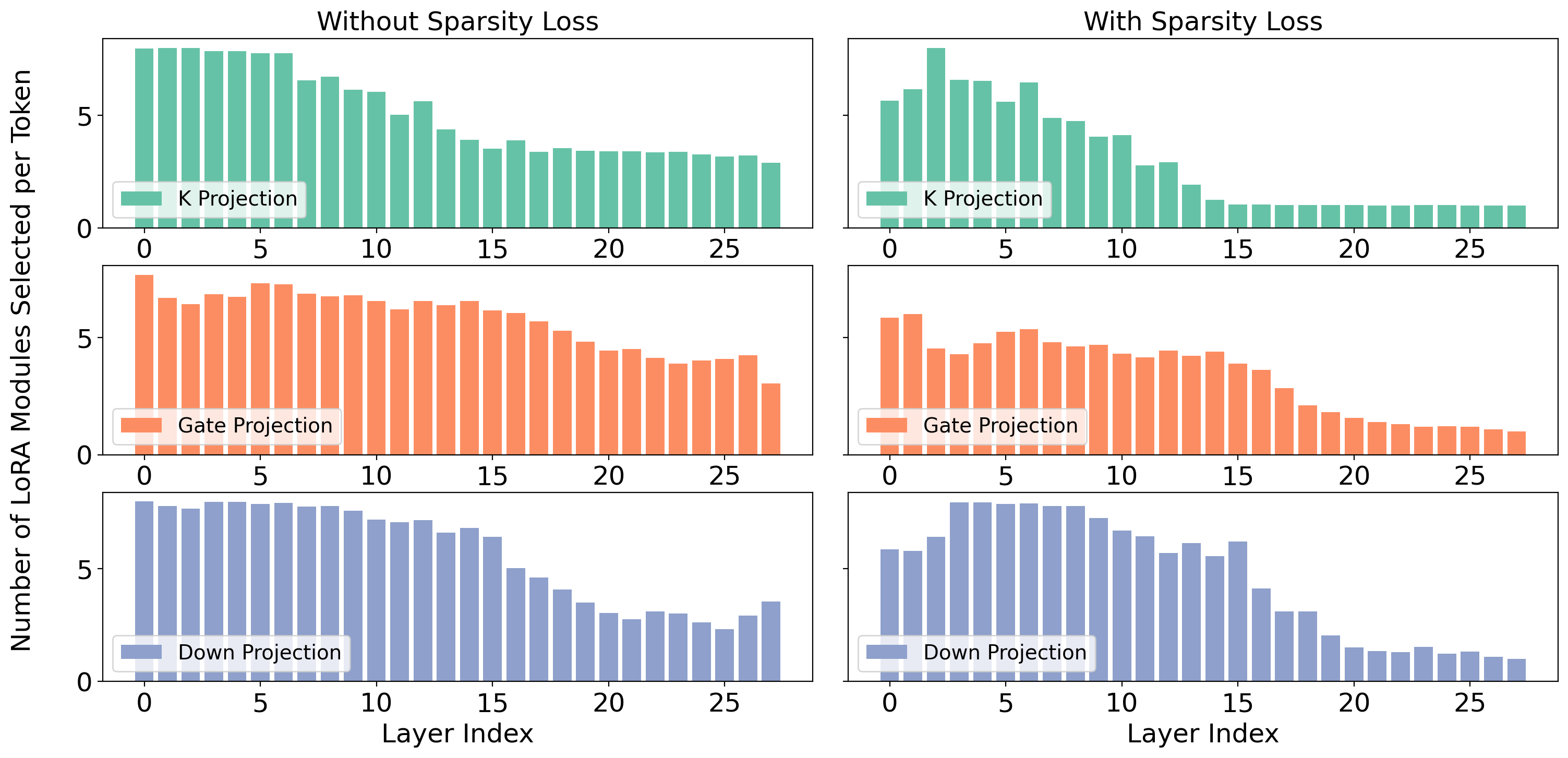}
        \caption{Average number of LoRA experts selected per token across layers.}
        \label{fig:expert_avg_activated}
    \end{minipage}

\end{figure}

\begin{table*}[t]
\centering
\scriptsize
\setlength{\tabcolsep}{4pt}
\renewcommand{\arraystretch}{1.2}
\begin{minipage}{0.48\textwidth}
\centering
\begin{tabular}{l|ccccc|c}
\toprule
$\lambda$ & ARC-C & ARC-E & Open & Comm & RTE & Avg. \\
\midrule
0.5  & 77.92 & 91.93 & 82.60 & 78.13 & \textbf{87.72} & 83.66 \\
-1.0 & 77.26 & 91.93 & 83.80 & 78.38 & 86.17 & 83.50 \\
-10.0& 77.26 & \textbf{92.11} & 83.40 & 78.71 & 85.29 & 83.35 \\
Predicted & \textbf{78.67} & \textbf{92.11} & \textbf{85.00} & \textbf{79.30} & \textbf{87.72} & \textbf{84.56} \\
\bottomrule
\end{tabular}
\caption{Quantative result on different $\lambda$ values for sparsity loss (Qwen3-1.7B).}
\label{tab:learnable results}
\end{minipage}
\hfill
\begin{minipage}{0.48\textwidth}
\begin{tabular}{l|ccccc|c}
\toprule
Coeff & ARC-C & ARC-E & Open & Comm & RTE & Avg. \\
\midrule
1.0  & 76.25 & 90.88 & 82.60 & 77.15 & \textbf{88.69} & 83.11 \\
0.1  & 76.92 & \textbf{92.28} & 82.80 & 79.03 & 87.30 & 83.66 \\
0.01 & 78.26 & 91.40 & 84.20 & 78.79 & 87.47 & 84.02 \\
0.0  & \textbf{78.67} & 92.11 & \textbf{85.00} & \textbf{79.30} & 87.72 & \textbf{84.56} \\
\bottomrule
\end{tabular}
\caption{Quantitative results on different coefficient values for sparsity loss (Qwen3-1.7B).}
\label{tab:sparse coef results}
\end{minipage}
\end{table*}
\vspace{-10pt}
\subsection{Sparsity control analysis}\label{sec:sparse control}
In this section, we evaluate how the proposed sparsity loss influences the expert pattern and impacts task performance on 5 datasets. To encourage sparsity, we set the target number of activated experts to $\le 2$. Results show that the sparsity loss effectively reduces the overall number of activated experts. Increasing the sparsity alignment coefficient enforces a stronger constraint on the admissible range of $\lambda$ in Eq.~\ref{eq:lambda-interval}. Moreover, Figure~\ref{fig:expert_avg_activated} illustrates the effect of applying the sparsity loss. Normally, more experts are activated in the lower layers, with activations gradually decreasing toward higher layers. Stronger regularization further suppresses higher-layer activations, while lower layers remain relatively dense. As shown in Table~\ref{tab:sparse coef results}, the results emphasize a clear trade-off between sparsity and task performance: disabling the sparsity loss achieves the best average score, yet certain tasks benefit from reduced expert usage, suggesting that the optimal sparsity level is task-dependent. Our main contribution in this aspect is not simply confirming the sparsity and performance trade-off, but demonstrating that LD-MoLE makes sparsity both controllable and learnable within a dynamic routing framework. This enables the number of activated experts to be reduced in a principled way, while maintaining competitive performance. 
We also provide an additional experiment of computational analysis for this loss with respect to different hyperparameter $\beta$ in Appendix~\ref{app:flops}.

At the same time, excessive sparsity can degrade performance, as we observe performance drops in the later stages of training. This suggests that enforcing too much sparsity beyond a certain point restricts flexibility in expert usage across layers, ultimately limiting performance. Overall, maintaining a balance dynamic system appears most effective for improving efficiency without undermining model capability.
\subsection{Harder token need more experts}\label{sec:harder token need more experts}
\begin{wrapfigure}{r}{0.5\textwidth}
    \vspace{-25pt}
    \centering
    \includegraphics[width=\linewidth]{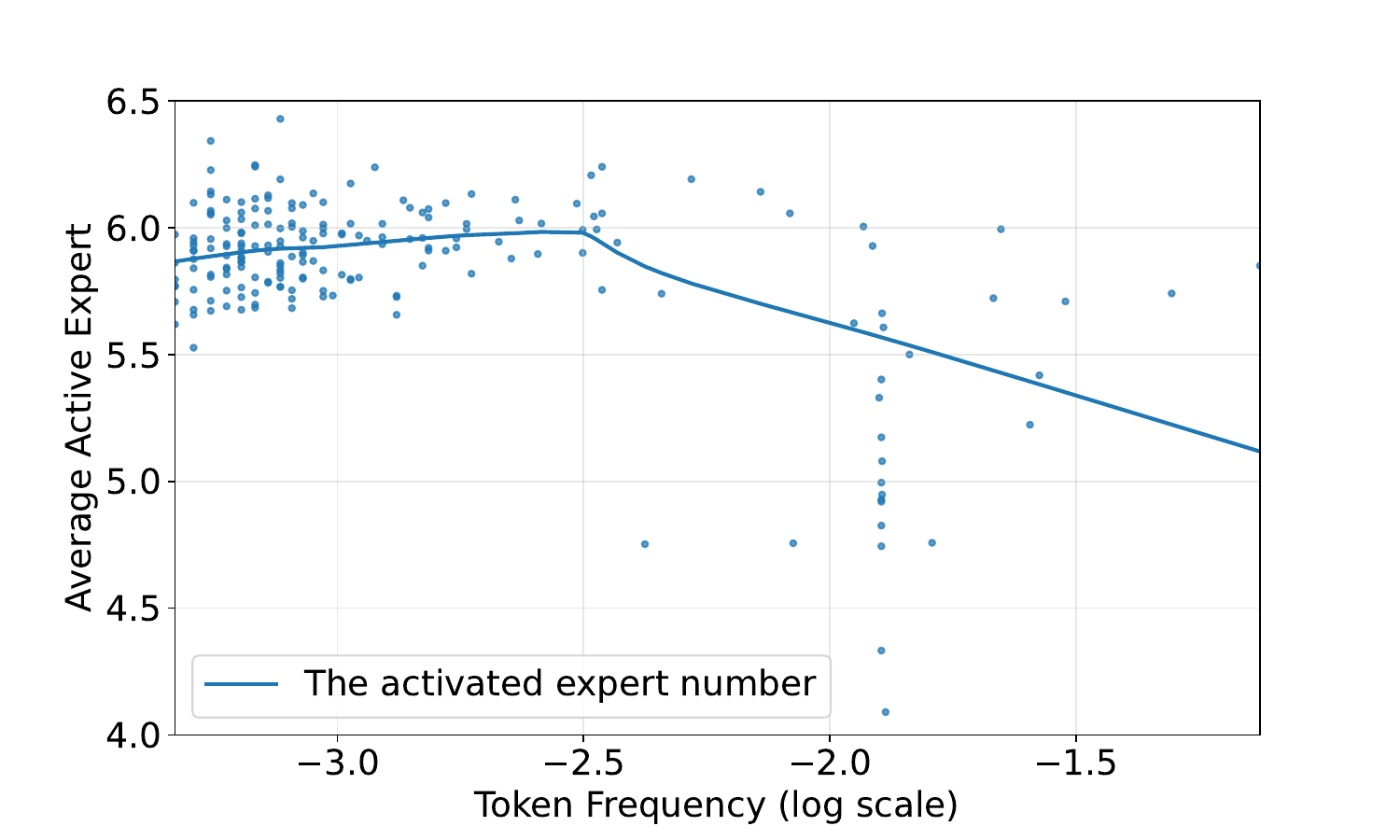}
    \caption{Correlation between the frequency of the top 200 most common tokens and their average number of activated experts. Each scatter point represents the average number of experts activated for a given token.}
    \label{fig:freq_vs_active}
    \vspace{-10pt}
\end{wrapfigure}
Dynamic routing allows the model to flexibly dedicate more capacity to tokens that require richer representations while conserving resources on more frequent or predictable ones.
As shown in Figure~\ref{fig:freq_vs_active}, tokens that frequently appear during training (e.g., prompt- and context-related tokens) tend to activate fewer experts, effectively compressing their representations.
In contrast, rarer or less familiar tokens activate a larger and more diverse set of experts, suggesting that tokens requiring greater modeling capacity benefit from richer expert combinations.
This behavior is consistent with observations reported in ReMoE~\citep{ReLUMoE} for MoE pretraining. Overall, such adaptive routing enables the model to balance computation efficiently across tokens, allocating more resources to rare or informative ones.
\section{More Ablation Studies}

\textbf{Ablation study on zero-activation problem:} In Appendix~\ref{ap:zero activation}, we provide a detailed analysis of the zero-activation issue in dynamic routing. In particular, we show that ReMoLE can assign zero experts to a token, which leads to degenerate representations, whereas our method guarantees at least one expert is activated through the closed-form Sparsegen routing.

\textbf{Ablation study on expert patterns during training:} In Appendix~\ref{ap:expert pattern during Training}, we show that expert activation patterns are largely established early in training and remain fixed thereafter.

\textbf{Additional exploration of LD-MoLE:} In Appendix~\ref{ap:additional exploration}, we present further experiments and discussions on our method. We provide an alternative local MLP design and analyze the impact of varying hidden dimensions in the shared MLP. In addition, we present experimental results on efficiency.

\vspace{-5pt}

\section{Conclusion}
In this work, we introduce LD-MoLE, a learnable dynamic routing method for Mixture of LoRA Experts. Building on Sparsegen, our approach leverages a shared MLP to learn the sparsity parameter $\lambda$, enabling adaptive expert allocation across layers and tokens in a parameter-efficient manner. Comprehensive experiments show that LD-MoLE achieves the highest average scores on both the Llama-3.2-3B, Llama-3.2-8b and Qwen3-
1.7B models compared to strong baselines, including TopK routing and ReLU-based routing, across a range of instruction-tuning and sequence classification tasks. For future research, we want to see how LD-MoLE performs in the pretraining stages of LLMs with its differentiability and controllable sparsity. Furthermore, integrating our dynamic routing framework with other PEFT methods or extending its applicability to new domains, such as multi-modal models, presents exciting opportunities for future exploration.

\section{Acknowledgements}
The work of Yuan Zhuang and Fei Miao is partly supported by the National Science Foundation under Grants CNS-2047354 (CAREER). We also gratefully acknowledge the generous support of a Qualcomm gift fund and our collaborator Jonathan Petit.\newpage

\bibliography{iclr2026_conference}
\bibliographystyle{iclr2026_conference}

\newpage
\appendix

\section{Proof for Lemma~\ref{lemma:activate one}}\label{ap:sparsegen_proof}

\begin{proof}
When $\lambda < 1$, the closed form of sparsegen is
\[
p_i = \left[\frac{u_i - \tau}{1-\lambda}\right]_+, 
\qquad i=1,\dots,E,
\]
where $\tau$ is chosen so that $\sum_{i=1}^E p_i = 1$. Since each
term is nonnegative and their sum equals $1$, at least one term must be
strictly positive. Hence the support $S(\vu) = \{i:\, p_i>0\}$ is nonempty
and $\|\vp\|_0 \geq 1$.

Equivalently, using the support-size characterization, let
$u_{(1)} \geq u_{(2)} \geq \cdots \geq u_{(E)}$ be the sorted coordinates and $U_k=\sum^{k}_{i=1}\vu_{(i)}$. From Eq.~\ref{eq:tau}
\[
k = \max\left\{k \in [E]\;\middle|\; 1-\lambda + k\,u_{(k)}
\;>\;U_k\right\}.
\]
For $k=1$ the inequality reduces to $1-\lambda > 0$, which holds when
$\lambda < 1$. Thus $k \geq 1$, so at least one index is selected.

For the edge case $\lambda = 1$, the quadratic term vanishes and the
objective reduces to a linear program:
\[
\max_{\vp \in \mathbb{R}^E} \;\; \vp^\top \vu,
\qquad \text{s.t. } \vp \geq 0,\;\mathbf{1}^\top \vp = 1.
\]
Its maximizer is any one-hot vector supported on
$\arg\max_i u_i$. Again, $\|\vp\|_0 = 1$.

In all cases with $\lambda \leq 1$, the optimizer $\vp$ is feasible
($\vp \geq 0$, $\mathbf{1}^\top \vp = 1$). A feasible vector on the simplex
cannot be identically zero, hence its support is nonempty.
\end{proof}

\section{Proof for Proposition~\ref{prop:sparsity_lam}}\label{ap:proof_sparse}
\begin{proof}
From Proposition~\ref{prop:closed}, the routing probabilities are
\[
    \vp_i = \left[\frac{\vu_{(i)} - \tau}{1 - \lambda}\right]_+,
\]
where $\tau$ defined in~\eqref{eq:tau}.  
For exactly $k$ experts to be activated, we require
\[
    \vu_{(k)} > \tau \quad \text{and} \quad \vu_{(k+1)} \leq \tau.
\]

Substituting~\eqref{eq:tau}, the first inequality gives
\[
    \vu_{(k)} > \frac{U_k - 1 + \lambda}{k}
    \quad \Longleftrightarrow \quad 
    \lambda < 1 - (U_k - k \vu_{(k)}).
\]
Similarly, the second inequality gives
\[
    \vu_{(k+1)} \leq \frac{U_k - 1 + \lambda}{k}
    \quad \Longleftrightarrow \quad 
    \lambda \geq 1 - (U_k - k \vu_{(k+1)}).
\]

Combining the two inequalities, we obtain
\[
    \lambda \in \Big[\,1 - (U_k - k \vu_{(k+1)})\ ,\ 1 - (U_k - k \vu_{(k)})\,\Big),
\]
which establishes~\eqref{eq:lambda-interval} for $1 \leq k \leq E-1$.  

For the case $k = E$, only the condition $\vu_{(E)} > \tau$ applies. 
Substituting again yields
\[
    \vu_{(E)} > \frac{U_E - 1 + \lambda}{E}
    \quad \Longleftrightarrow \quad 
    \lambda < 1 - (U_E - E \vu_{(E)}).
\]
\end{proof}

\section{Additional Experiment Results}\label{ap:more ablation}

\subsection{The Zero-Activation Issue in Dynamic Routing}\label{ap:zero activation}
 A key challenge in designing dynamic and differentiable routing mechanisms is the possibility of \emph{zero activation}, where a token is not assigned to any expert. This problem occurs when the routing function produces zero outputs, leaving the token without an activated expert. Such cases not only waste model capacity but also hinder gradient flow, making it difficult for the affected experts to learn meaningful representations.  

This issue arises in activation-based gating mechanisms such as ReLU-based routing, where the router may output all non-positive values for certain tokens. In practice, this leads to suboptimal expert utilization: some tokens receive no expert processing, while others may be redundantly assigned. Figure~\ref{fig:remole_expert} compares the expert activation patterns of LD-MoLE and ReMoLE. From Figure~\ref{fig:remole_expert}, ReMoLE shows a similar trend to LD-MoLE, it activates more experts in the lower layers and fewer in the higher layers. However, its higher layers often fall below average 1.0 activated experts. This indicates that, for some tokens, the ReLU-based router fails to activate any experts in the upper layers.

In contrast, our proposed L2D-MoLE framework guarantees at least one routing coefficient remains strictly positive for every token. This ensures that all tokens are processed by at least one expert, while still enabling dynamic and sparse expert allocation across layers.
\begin{figure}[htbp]
    \centering
    \includegraphics[width=1.0\textwidth]{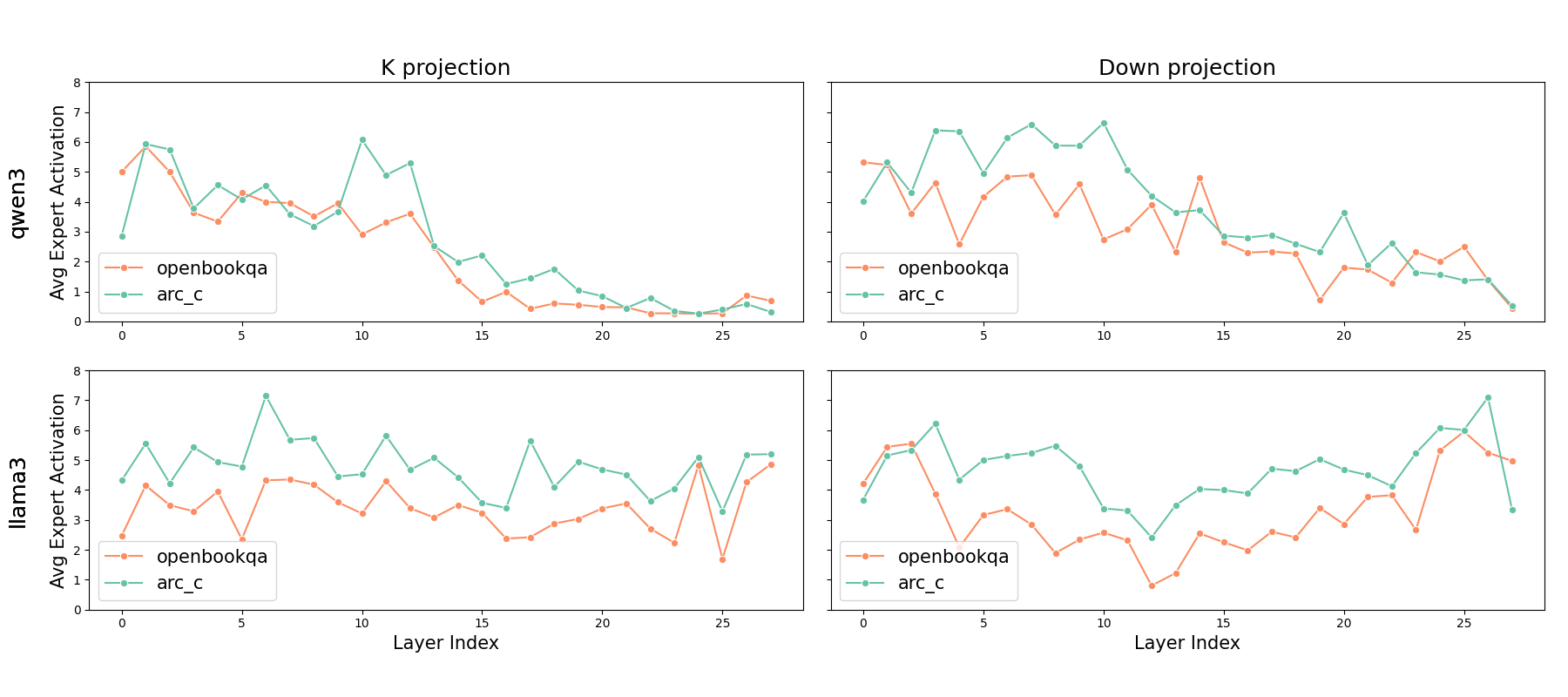}
    \caption{The average expert activation for ReMoLE on K and Down Projection modules. The green and orange line indicate the activation pattern on OpenbookQA and ARC-Chanllenge dataset respecitively. }
    \label{fig:remole_expert}
\end{figure}
\subsection{Expert Pattern During Training}\label{ap:expert pattern during Training}
In Fig.~\ref{fig:expert heatmap}, we compare the expert activation patterns at the first and final training epochs. The trend described in Sec~\ref{sec:sparse control} which more experts are activated in the lower layers, with a gradual decrease toward the higher layers has already established by the end of the first epoch.
The distribution remains largely consistent throughout training, as shown by the similarity between the heatmaps of routing ratio for Epoch 1 and Epoch 10. This indicates that routing specialization emerges very early and stabilizes quickly, leaving little room for substantial redistribution across layers as training progresses.
Such stability underscores the importance of the early training phase: the model rapidly learns how to allocate experts, and subsequent optimization primarily fine-tunes within this established structure rather than reshaping it.

\begin{figure}[htbp]
    \centering
    \includegraphics[width=1.0\textwidth]{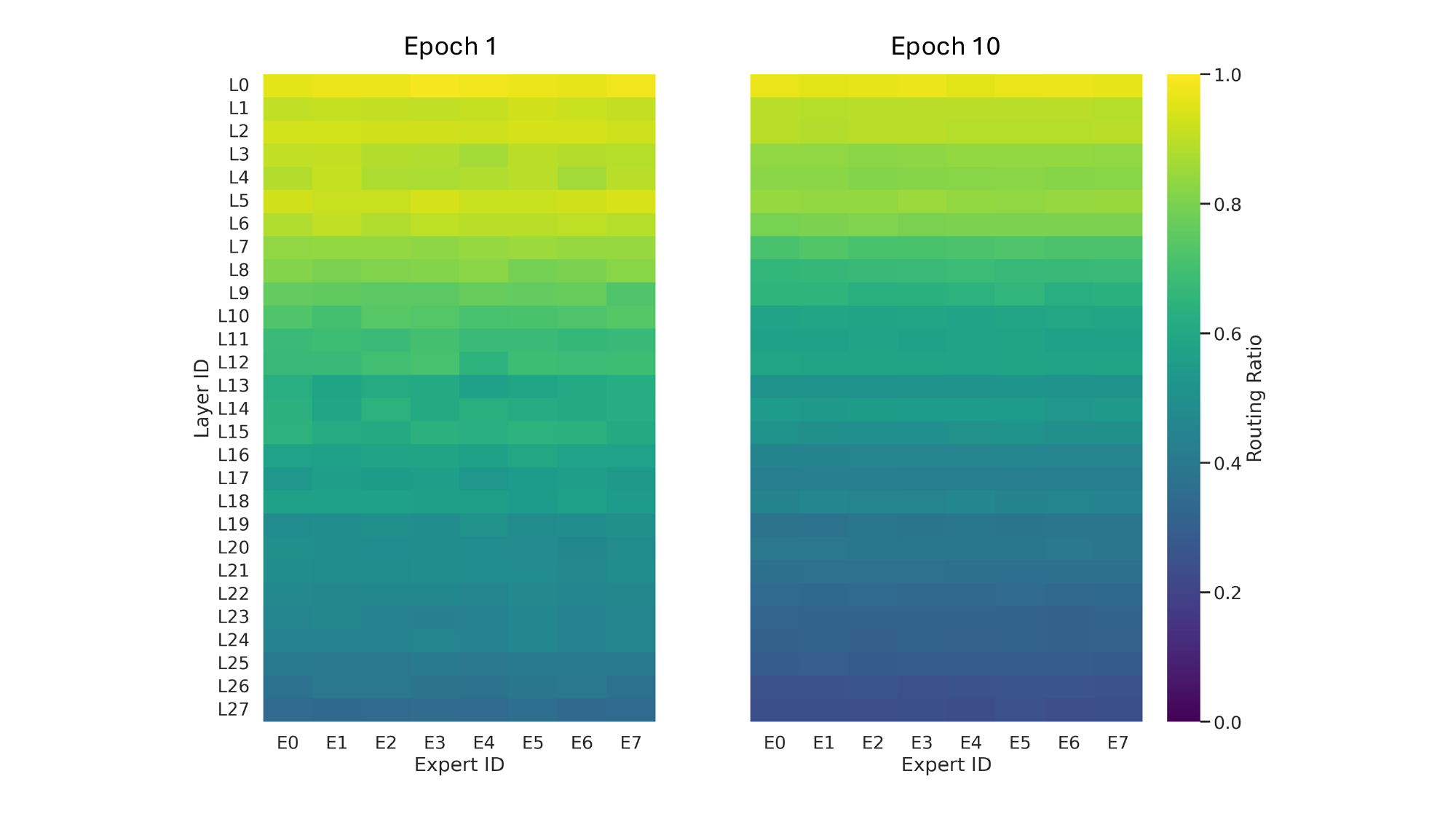}
    \caption{Comparision of the routing ratio heatmap of the expert activation pattern between the epoch 1 and epoch 10. }
    \label{fig:expert heatmap}
\end{figure}
\subsection{additional exploration on LD-MoLE}\label{ap:additional exploration}
\textbf{Shared vs Local MLP}\\
In Sec.~\ref{sec:experiments}, we presented results using the shared MLP design for learning the parameter $\lambda$. Here, we investigate an alternative architecture in which, instead of instantiating one expert MLP per unique input dimension (as described in Sec.~\ref{model:sharemlp}), we assign a dedicated MLP to every target module (i.e., Q, K, V, O, Up, Gate, and Down projections). This design allows each module to learn its own specialized routing strategy, which is intuitively reasonable since different modules process distinct types of information. However, this approach significantly increases the number of tunable parameters, as modern LLMs contain hundreds of such modules. To mitigate this overhead, we restrict each local MLP to a single linear layer, such that $f$ in Eq.~\ref{eq:learnable_lambda} reduces to a weight matrix $W_{\text{mlp}} \in \mathbb{R}^{d_{\text{in}} \times 1}$. But still, unlike the shared MLP design, the trainable parameters of the local MLP structure would be associated with the layer number of pretrained transformer models.

We report the comparison between shared and local structures on Qwen3-1.7B and Llama3-2.3B in Table~\ref{tab:shared vs local}. Results show that the local design adds fewer additional parameters than the shared counterpart, but overall achieves weaker performance. While the local MLP occasionally outperforms the shared version on certain datasets, the gains are marginal. This suggests that although local MLPs can individually learn $\lambda$, its limited capacity that using only a single linear transformation hinders their ability to fully capture the complexity of routing decisions.

\begin{table}[t]
  \centering
  \scriptsize
  \setlength{\tabcolsep}{5pt} 
  \renewcommand{\arraystretch}{1.5} 
  \begin{tabular}{l|l|l|cccccccc|c}
    \toprule
    Method & Model & TP & ARC-C & ARC-E & Open & Comm & SWAG & HellaSWAG &  CoLA & RTE & Avg \\
    \midrule
    Local  & Llama3.2-3B  & 3.13 \% & 73.67 & \textbf{89.65} & 83.80 & \textbf{81.59} & 91.29 & 93.50 & 84.28 & \textbf{89.70} & 85.93 \\
    Shared  & Llama3.2-3B  & 3.28 \% & \textbf{74.58} & 89.47 & \textbf{84.00} & 81.42 & \textbf{91.37} & \textbf{93.60} & \textbf{86.02} & 88.38 & \textbf{86.10} \\
    \midrule
    Local  & Qwen3-1.7B & 4.14 \% & 78.00 & 91.75 & 84.00 & \textbf{79.38} & \textbf{87.02} & 88.55 & \textbf{82.67} & 85.88 & 84.65 \\
    Shared  & Qwen3-1.7B & 4.23 \% & \textbf{78.26} & \textbf{92.11} & \textbf{85.00} & 79.30 & 86.72 & \textbf{88.71} &  82.61 & \textbf{87.72} & \textbf{85.05} \\
    \bottomrule
  \end{tabular}
  \caption{Comparison between shared and local MLP structure for LD-MoLE.}
  \label{tab:shared vs local}
\end{table}
\textbf{Hidden Dimension in Shared MLP}\\
In Sec.~\ref{sec:experiments}, we only use one MLP per unique input dimension. For example, Qwen3-1.7B contains seven modules but only two distinct input sizes (2048 and 6144), so only two MLPs are required. We set the hidden size of the MLPs to 256 for Qwen3-1.7B and 512 for Llama-3.2-3B. Here, we provide comprehensive results across five datasets using various hidden dimensions in Table~\ref{tab:hidden dimension}. The results show that performance peaks at 256 for Qwen3-1.7B and 512 for Llama3.2-3B, suggesting that each base model has an optimal hidden dimension. A plausible explanation is the difference in input size across models. For Llama3.2-3B, the module dimensions are larger, requiring a higher-capacity MLP (larger hidden dimension) to effectively capture the meaningful information and relationships needed for routing. Conversely, for Qwen3-1.7B, a smaller hidden dimension is sufficient, as overly large MLPs may introduce redundancy and lead to diminishing returns. Therefore, selecting the hidden dimension should balance representation capacity, parameter efficiency, and generalization ability.
\begin{table}[t]
  \centering
  \scriptsize
  \setlength{\tabcolsep}{10pt} 
  \renewcommand{\arraystretch}{1.3} 
  \begin{tabular}{l|l|l|ccccc|c}
    \toprule
    Dimension & Model & TP & ARC-C & ARC-E & Open & Comm & RTE & Avg \\
    \midrule
    128  & Llama3.2-3B  & 3.15 \% & 73.91 & 88.77 & 82.20 &  81.51  & \textbf{90.28} & 83.33 \\
    256  & Llama3.2-3B  & 3.20 \% & 72.24 & \textbf{89.65} & 82.80 & 81.18 & 88.23 & 82.82 \\
    512  & Llama3.2-3B  & 3.28 \% & \textbf{74.58} & 89.47 & \textbf{84.00} & \textbf{81.42} & 88.38 & \textbf{83.57} \\
    \midrule
    128  & Qwen3-1.7B & 4.17 \% & 76.59 & 91.75 & 84.00 & 79.46 &  86.68 & 83.66  \\
    256  & Qwen3-1.7B & 4.23 \% & \textbf{78.67} & \textbf{92.11} & \textbf{85.00} & 79.30& 87.72 & \textbf{84.56} \\
    512  & Qwen3-1.7B & 4.34 \% & 77.59 & 91.58 & 83.40 & \textbf{79.54} & \textbf{88.23} & 84.07 \\
    \bottomrule
  \end{tabular}
  \caption{Comparison between different hidden dimension (128, 256 and 512) used in LD-MoLE MLP.}
  \label{tab:hidden dimension}
\end{table}

\textbf{Additional Experiment for the Sparsity Loss:} \\
\label{app:flops}
In this section, we compare the computational cost (in FLOPs) under different hyperparameter settings for our analytical loss function (Eq.~\ref{eq:lm_loss}). As shown in Table~\ref{tab:flops}, the FLOP analysis further highlights the efficiency of the sparsity loss introduced in Section~\ref{sec:sparsity loss}. As the sparsity-loss coefficient $\beta$ increases, the number of activated LoRA experts decreases, leading to a notable reduction in overall FLOPs. However, compared with conventional TopK and ReLU routing, the primary source of additional computation in our method arises from the shared MLP used to predict each token’s sparsity factor $\lambda$. To further mitigate this overhead, a promising direction is to augment an additional dimension into the gating projection for generating $\lambda$ and we leave it to the future exploration.
\begin{table}[H]
\centering
\scriptsize
\setlength{\tabcolsep}{3pt}
\renewcommand{\arraystretch}{1.2}
\begin{tabular}{l|ccccccc}
\toprule
Qwen3-1.7B  & MoLA-8888 & MoLA-2468 & ReMoLE & Ours($\beta=1.0$) & Ours($\beta=0.1$) & Ours($\beta=0.01$) & Ours($\beta=0$) \\
\midrule
MFLOPs      & 43     & 40        & 74     & 83     & 90     & 100     & 106   \\
\bottomrule
\end{tabular}
\caption{Effective FLOPs (router + LoRA experts) across different $\beta$ parameter for the sparsity loss and different routing baseline. Backbone FLOPs are excluded since they are identical across methods.}
\label{tab:flops}
\end{table}

\section{hyperparameter and training setup}\label{ap:hyperparam}
\textbf{Training Setup:} To ensure fairness, we adopt a consistent parameter-tuning pipeline and apply identical prompts across all datasets and methods. For instruction-tuning tasks, we mask out the prefix and context, training only on the final answer tokens. For sequence classification tasks, since LLMs lack a dedicated classification or separator token, we omit the former and replace the latter with the end-of-sentence token to mark sentence boundaries.
\\
\textbf{Hyperparameters:} Table~\ref{tab:hyperparams} summarizes the hyperparameter configurations used across different routing methods. To ensure fairness, we keep most training settings consistent, including optimizer (AdamW), batch size (16), and number of epochs (10). All methods are trained with LoRA rank $r=8$, scaling factor $\alpha=16$, and 8 experts. We also apply the method on all the target modules (i.e., Q, K, V, O, Up, Gate, and Down projections). For optimization, we adopt different learning rate schedules to align with prior works. Both LD and ReMoLE use the MultiStepLR scheduler with an initial learning rate of $1\times10^{-4}$, while MoLA follows its original implementation with cosine annealing and a slightly higher learning rate ($3\times10^{-4}$). This setup provides a balanced comparison by respecting the design choices of each baseline while maintaining comparable training stability. Dropout is applied to mitigate overfitting. LD-MoLE and ReMoLE use a dropout of 0.1, whereas MoLA uses 0.05, again consistent with its reported configuration. The cutoff length for all experiments is fixed to 1024 to ensure uniform input context across models.

\begin{table}[t]
\centering
\caption{Hyperparameters used for different methods.}
\label{tab:hyperparams}
\small
\setlength{\tabcolsep}{8pt}
\renewcommand{\arraystretch}{1.1}
\begin{tabular}{l|c|c|c}
\toprule
\textbf{Method} & \textbf{LD-MoLE} & \textbf{ReMoLE} & \textbf{MoLA} \\
\midrule
Cutoff Length        & 1024   & 1024   & 1024 \\
Learning Rate        & 1e-4  & 1e-4  & 3e-4  \\
scheduler            & MultiStepLR  & MultiStepLR  & CosineAnneal  \\
Optimizer            & AdamW & AdamW & AdamW \\
Batch size           & 16    & 16    & 16 \\
Dropout              & 0.1  & 0.1  & 0.05  \\
Epochs               & 10     & 10     & 10   \\
Target Modules       & All & All & All \\
Routing type         & Dynamic & Dynamic & Fixed \\
\midrule
LoRA Rank $r$        & 8    & 8    & 8 \\
LoRA Alpha $\alpha$  & 16   & 16    & 16 \\
Experts              & 8    & 8     & 8 \\
TopK                & -   & -    & 2   \\
\bottomrule
\end{tabular}
\end{table}
\section{dataset information}\label{ap:dataset}

In this section, we provide additional details about the datasets and experimental setup. Each dataset is divided into three splits: training, validation, and test. Our experiments are conducted by training on the training split and evaluating on the validation split, without using the test split.

\textbf{ARC (AI2 Reasoning Challenge):} ARC is a benchmark of grade-school level science questions with 4 choices, divided into two subsets: ARC-Easy, which consists of relatively straightforward questions, and ARC-Challenge, which requires more complex reasoning and deeper scientific knowledge. For ARC-Easy, there are 2251 samples in train split, 570 samples in validation split and 2376 samples in test splits. For ARC-Challenge, there are 1119 samples in train split, 299 samples in validation split and 1172 samples in test splits.

\textbf{CommonsenseQA:} CommonsenseQA is a multiple-choice question answering dataset with 5 choices that evaluates a model’s ability to apply various forms of commonsense knowledge. It consists of 12,102 questions, each with one correct answer and four distractors. There are 9741 samples in train split, 1221 samples in validation split and 1140 samples in test splits.

\textbf{OpenBookQA:} OpenBookQA is designed to advance research in complex question answering with 4 choices by evaluating both scientific knowledge and language understanding. The dataset is modeled after open-book exams: it provides a collection of scientific facts that must be combined with broader commonsense knowledge to answer multiple-choice questions. Unlike simple fact-retrieval tasks, OpenBookQA emphasizes multi-step reasoning, integration of external knowledge, and deeper text comprehension. There are 4957 samples in train split, 500 samples in validation split and 500 samples in test splits.

\textbf{SWAG:}
This benchmark evaluates commonsense reasoning by asking the model to predict the most plausible continuation of a given scenario. Each instance is formulated as a 4-way multiple-choice question, with one correct answer and three adversarially generated distractors.
There are 73546 samples in train split, 20006 samples in validation split and 20005 samples in test splits.

\textbf{HellaSWAG:} 
It's designed to evaluate a model’s ability to complete sentences in a coherent and contextually appropriate way. Similar to SWAG, each examples has 4 options or candidate endings, where the task is to select the most plausible continuation. The challenge lies in the fact that success requires more than recognizing surface-level word patterns—it demands an understanding of meaning, context, and commonsense reasoning. While this task is trivial for humans with extensive real-world and linguistic experience, it remains a significant hurdle for machines. There are 39900 samples in train split, 10000 samples in validation split and 10000 test samples. 

\textbf{MMLU-Pro:} It is designed to evaluate a model’s multi-domain knowledge understanding and complex reasoning abilities. Compared to MMLU, MMLU-Pro introduces more challenging, reasoning-focused questions and expands the answer choices from 4 to 10 options, while removing trivial and noisy items. Models using Chain-of-Thought reasoning perform better, reflecting the benchmark’s emphasis on deeper reasoning. The dataset contains 12,102 samples.

\textbf{CoLA(Corpus of Linguistic Acceptability):} It's part of the General Language Understanding Evaluation(GLUE) benchmark and it consists of 10,657 sentences drawn from 23 linguistics publications, each annotated for grammatical acceptability by the original authors. The public release includes 9,594 sentences for training and development, while 1,063 test sentences are held out.

\textbf{RTE(Recognizing Textual Entailment):}
It's part of the General Language Understanding Evaluation(GLUE) benchmark is consist of a series of annual entailment challenges. Examples are drawn from news and Wikipedia text. All datasets are converted into a two-class classification: entailment vs. not entailment,
containing 2,490 training, 277 validation and 3000 test samples.

\end{document}